\documentclass[a4paper, 11pt]{article}

%\documentclass[a4paper,12pt]{spieman}  %>>> use this instead for A4 paper
%\documentclass[12pt]{spieman}  %>>> 12pt font mandatory; use this for US letter paper 
%%\documentclass[nocompress,12pt]{spieman}  %>>> to avoid compression of citations
%% \addtolength{\voffset}{9mm}   %>>> moves text field down
%%\usepackage{amsmath}  %>>> for AMS math formatting, 

\usepackage[]{graphicx}
\usepackage{setspace}
\usepackage{tocloft}
\usepackage{url}
\usepackage{amsmath,amsfonts}
\usepackage[lined]{algorithm2e}
\usepackage[vmargin=2cm,hmargin=2cm]{geometry}
\usepackage{color}

\newenvironment{proof}[1][Proof]{\begin{trivlist}
\item[\hskip \labelsep {\bfseries #1}]}{\end{trivlist}}

\title{The Color Graph-based Wavelet Transform with Perceptual Information}
\author{M. Malek,  D.  Helbert, P. Carr\'e\\ Xlim, UMR CNRS 7252, University of Poitiers, France}
%\supscrsm{a}
%\affiliation{
%}

\cftpagenumbersoff{figure}
\cftpagenumbersoff{table} 

\newcommand{\argmin}{\arg\!\min}
\newcommand{\R}{\mathbb{R}}
\newtheorem{proposition}{Proposition}

\begin{document}

\maketitle 

\begin{abstract}
In this paper, we propose a numerical strategy to define a multiscale analysis for color and multicomponent images based on the representation of data on a graph. Our approach consists in computing the graph of an image using the psychovisual information and analysing it by using the spectral graph wavelet transform. We suggest introducing color dimension into the computation of the weights of the graph and using the geodesic distance as a mean of distance measurement. 
We thus have defined a wavelet transform based on a graph with perceptual information by using the CIELab color distance. 

This new representation is illustrated with denoising and inpainting applications. Overall, by introducing psychovisual information in the graph computation for the graph wavelet transform we obtain very promising results. Therefore results in image restoration highlight the interest of the appropriate use of color information.
\end{abstract}

%>>>> Include a list of up to six keywords after the abstract
%\keywords{Wavelet Transform, graph, image restoration, psychovisual system}

%>>>> Include contact information for corresponding author
{\noindent \footnotesize{\bf For all correspondence to}: David Helbert, E-mail:  \url{david.helbert@univ-poitiers.fr} }
%%%%%%%%%%%%%%%%%%%%%%%%%%%%%%%%%%%%%%%%%%%%%%%%%%%%%%%%%%%%%

\section{Introduction}
The  use of the multiscale analysis transform in the field of image processing efficiently analyses the visual information of  an image allowing the localization of the information in the time and frequency domains. The classical schemes of the wavelet decomposition are not specially defined for a color image and they are usually marginalized: it consists in applying the transform to each component separately. But that could lead to undesirable effects on the color.
 We can still find in the literature several frameworks that propose multiscale vectorial techniques. For example in our team we have developed an approach that uses the analytic wavelet which is principally based on the principle of the monogenic signal \cite{ar:mong}. This allows the non-marginalized processing of color and provides an analysis of the separate orientations of the information but this representation is based on the signal principle.

Recently many fundamental based-graph tools in the field of signal processing have emerged and have allowed to extend the classical multiscale transform to an irregular domain \cite{ar:Nar}, \cite{ar:Gav}. In \cite{ar:cof} the diffusion wavelets are based on the definition of the diffusion operator and the localization at each scale depends on the dyadic powers of this operator. The graph wavelet transform proposed by Crovella and Kolaczyk in \cite{ar:crov} is an example of the design in the vertex domain, when the construction of the wavelet function $\psi_{j,n}$ is designed using the geodesic distance and is localized according to the source $n$ and the scale $j$.
 In \cite{ar:narang1} Narang and Ortega proposed a local two channel critically sampled filter bank on graphs that requires the combination of different processing blocks such as upsampling on graphs (select of vertex and graph reduction) \cite{ar:narang3} and filtering. For this, the authors introduced the technique of graph partition based on the max cut on the graph and the $\mbox{k-RGB}_s$ as a graph reduction method. In order to ensure the recovery, they proposed in \cite{ar:narang2} a perfect reconstruction of the two channel wavelet filter banks. So they take advantages of the properties of the bipartite graphs to define the low pass and high pass filters. They also need to decompose the original graph into a series of bipartite subgraphs.
In \cite{ar:fram} Shuman {\it et al} extend the Laplacian pyramid transform on the graph domain. For this, they suggest a general way to select vertices based on the polarity of the largest eigenvector, then they use the Kron reduction technique \cite{ar:kron} followed by the step of sparsification graph \cite{ar:sparsi}. That provides a perfect reconstruction transform but the dimension of the output is different than the input signal. 
None of these methods takes into account the perceptual information in the based-graph wavelet decomposition.

In \cite{ar:hammond}  the Spectral Graph Wavelet Transform (SGWT) is based on the representation of the wavelet coefficients in the spectral domain. 
It turns out that it is more flexible and technically easier to implement. Furthermore, graph construction is a significant milestone for the application of this transform. 

In our work, we focus on the properties of the SGWT to define an approach for color analysis.  In practice, it is increasingly necessary to adapt strategies with human vision and it is thus becoming desirable to take into consideration the psychovisual system in the design of the multi-scale analysis tools. The idea is to use the geodesic distance to compute the weight of pairwise pixels.  This principle helps highlight the perceptual properties through the implementation of the $\Delta E_{2k}$ color difference\cite{ar:zhang,ar:luo,ar:vienna}. To the best of our knowledge, it is the first suggestion of a wavelet transform based on perceptual graphs.    

In this paper, we review a few methods to construct graphs from image data and we give an overview of the spectral graph theory and the main characteristics of the SGWT. We then suggest a color representation by inserting the color dimension into the measured distance, and more generally we consider the possibility of a psychovisual representation by the insertion of the  $\Delta E_{2k}$ color difference in the graph computation. 
To interpret the influence of the structure of the graph on the wavelet scale coefficients, we propose a general way to understand the behavior of the decomposition using the notion of quadratic form.
We use the highlight of the color perceptual features associated with this new representation through the application of image denoising and we develop a method of inpainting on a graph from the wavelet coefficients.
    
In section \ref{sec:2} we recall some elements of the weighted graph and spectral graph theory. In section \ref{sec:33} we develop our method by inserting the color dimension into the distance measured and finally in section \ref{sec:5} we develop a color image denoising method and an inpainting strategy.

%%%%%%%%%%%%%%%
% Preliminary Notations and Definitions
%%%%%%%%%%%%%%%
\section{Preliminary Notations and Definitions} \label{sec:2}
This section provides the notations and the definition of a graph used throughout the paper, we also present how one can represent discrete data defined on regular or irregular domains with a weighted graph.
To begin with, we will review some concepts that are introduced or presented for example in \cite{ar:chung,spielman2012spectral,ar:hammond}.
%% Definition of a graph
\subsection{Definition of a graph}
Any discrete domain can be represented by a weighted graph. We note $G=(V,E,w)$ a loopless, undirected and weighted graph where $V$ is a finite set comprised of elements called vertices and $E$ is a subset of edges $V \times V$. An edge $(n,m)\in E$ connects two neighbor vertices $n$ and $m$. 
The neighborhood of a vertex $n$ is noted:
\begin{align}
\mathcal N(n)=\left\{ m\in V\backslash \{n\}:(m,n)\in E\right\}.
\end{align}

The weight $w_{mn}$ of an edge $(m, n)$ can be defined with a function $w:V\times V  \longrightarrow \mathbb{\mathbb{R}}^+$ such that $w(m, n)=w_{mn}$ if $(m,n)\in E$ and $w(m,n)=0$ otherwise. 

Let $\mu_{mn}$ be a distance computed between the vertices $m$ and $n$. In practice, it is difficult to establish a general rule to define this distance. The methods used are essentially based on local and non-local comparisons of features (see more details in \cite{ar:buades}).  Obviously, the weight function is represented by a decreasing function as follows:
\begin{align}\label{eq:we}
w(m,n)&=\exp(\frac{- \mu_{mn} ^2}{\sigma^2}) ,\; \sigma > 0,
\end{align}  
where $\sigma$ is the parameter that adjusts the influence on the pixel's neighborhood in the distance computation: the weight $w(m,n)$ of edge $(m,n)$ increases when $\sigma$ increases.

See more examples of weight functions in \cite{ar:gradly}.

We note $\textbf A$ the weighted adjacency matrix of size $N \times N$ with $N$ the total number of vertices:
\begin{eqnarray}\label{eq:A}
\textbf{A}_{m,n}=\left\{ 
\begin{array}{l}
w(m,n) \;\mbox{if}\; (m,n)\in E\\ 0 \;\mbox{otherwise.}
\end{array}
\right. 
 \end{eqnarray}

The degree $\delta_m$ of a vertex $m$ is defined as the sum of the weights of all the adjacent vertices to $m$ in the graph: 
\begin{align}\label{eq:D}
\delta_m=\sum_{n\neq m}\textbf{A}_{m,n}.
\end{align}
The degree matrix $\textbf{D}$ is the diagonal matrix of the degrees $\delta_m$ of $G$.

One of the issues encountered when dealing with the graph is the graph construction. There are two steps to build the graph, the first one consists in the definition of the graph topology which focuses primarily on the link between all regions. Then, in the second step the weights are computed to measure the similarities between these regions, as discussed below.

%% Definition of data in discrete domain

Depending on the structure of the data, there are different ways to construct the graph. The constructed graph has to be adapted to represent the discrete domain of the data but there is no general rule of data representation and that basically depends on the application \cite{ar:hein}. 
Indeed, in the case of an irregular domain, one can use the $v$-neighborhood graph noted $G_v$ \cite{ar:jarom} \cite{ar:maier}. Its principle is based on the notion of neighborhoods of a vertex $n$ that determine the set of vertices $N_v (\mu)$ whenever the distance $\mu$ is  lower than or equals the threshold parameter $v$.
The distance $\mu$ helps measure the similarities between the features vectors associated with the pixels. The simplest distance is the Euclidean distance noted {\em ED}.

In the case of a regular domain, the spatial organization of data is well-known, thus appropriate measurements are used in order to design a regular mesh such as the infinity norm and the unitary norm.
An alternative approach used in image processing consists in representing data structure in the shape of regions which are represented thereafter by a Region Adjacency Graph (see examples of partitions in \cite{ar:cousty,ar:arbel,ar:vincent}). 
In our approach we work directly on the pixels and to analyze the constructed graph we propose to work in the spectral graph domain. In the following we give some definition and notation of the spectral graph theory. 

%% The spectral graph theory
\subsection{The spectral graph theory}  \label{sec:3}
In this section, we recall some basic definitions of the spectral graph theory \cite{ar:chung,spielman2012spectral}, and we also provide some details about the Spectral Graph Wavelet Transform (SGWT) \cite{ar:hammond}.

Here, we consider an undirected weighted graph $G$ with $N$ vertices. We can associate a vector in $\R^N$ on any scalar valued function $f$ defined on the vertices of the graph. In the case of a color image, this vector is the three components of the image.

%The graph Fourier transform
\subsubsection{The based-graph Fourier transform}
This amounts to adapting the Fourier transform for the graph domain. Let $G=(E,V,w)$ be a weighted graph, the non-normalized Laplacian graph $\mathbf L$ is:
\begin{align}
\mathbf{L}=\mathbf{D}-\mathbf{A},
\end{align}  
where $\mathbf A$ and $\mathbf D$ denote respectively the adjacency matrix and the degree matrix computed according to equations \ref{eq:A} and \ref{eq:D}.

Thanks to the spectral analysis, it is established that the Fourier basis is associated with the eigenvectors of the Laplacian operator.
By analogy, the eigenvectors of $\mathbf{L}$ noted by $\chi_l$ with $l=0,\cdots,N-1$ constitute an orthonormal basis for the graph spectral domain:
\begin{align}
\mathbf{L}\chi_l&=\lambda_l \chi_l,
\end{align} 
with $\{(\lambda_l,\chi_l)\}_{l=0,1,...,N-1}$ the eigenvalue and eigenvector pairs of the graph Laplacian $\mathbf L$.

Let $f \in \R^N$ be a based-graph signal. Consequently, the based-graph Fourier transform is defined by:
\begin{align}
\hat{f}(l)=\left\langle \chi_l , f \right\rangle=\sum_{n=1}^N \chi_l^{\ast} (n) f(n),
\end{align}
with $N$ the number of vertices. We can also use a vector expression:
\begin{align}\label{eq:fou}
\hat{f}={\bf X}^T f,
\end{align}
where $\bf X$ is the matrix whose columns are equal to the eigenvectors of the Laplacian graph $\textbf{L}$ and $X^T$ the transpose of a matrix $X$.

The construction of the elements of the basis of spectral graph domain from the eigenvalues and the eigenvectors therefore leads to the definition of the wavelet transform on graphs\cite{ar:hammond}.

%The spectral graph wavelet  transform
\subsubsection{The spectral graph wavelet  transform}\label{ssec:sgwt}
The SGWT is an approach of a multiscale graph-based analysis which is defined in the spectral domain \cite{ar:hammond}. The scale notion for the graph domain is defined by analogy with the representation of the classical wavelet transform in the spectral domain.

The graph wavelet operator at a given scale $t$ is defined by $T_g^t=g(t \mathbf{L})$ with $g(x)$ stands for the wavelet kernel which is considered as a band pass function: $g(0)=0$,\ $\lim_{x\rightarrow \infty} g(x)=0$. 
The wavelet coefficient $W_f$ at the scale $t$ has the following expression\cite{ar:hammond}:
\begin{align}\label{eq:wv}
W_f(t,n) &=T_g^t (f)(n)=\left(g(t\textbf{L}) f\right)(n)\\
&=\sum_l g(t\lambda_l) \hat{f}(l) \chi_l(n),
\end{align}
whereas the scaling function coefficient $S_f$ is given by:
\begin{align}\label{eq:2}
S_f(n) = \left(h(\textbf{L})f\right) (n)=\sum_l h(\lambda_l) \hat{f}(l) \chi_l(n),
\end{align}
where $h(x)$ represents a low pass function satisfying $h(0)>0$, and $h(x)\rightarrow 0 $ as $x\rightarrow \infty$.

The wavelet and the scaling kernels have to satisfy some conditions  \cite{ar:hammond}. In order to simplify the calculations,  we chose to implement the cubic spline kernel. To facilitate the use of this transform, the Chebyshev polynomial approximation is suggested by the authors \cite{ar:hammond}. One can notice that this transform is invertible.

This transform provides a general framework for adapting the data representation. Indeed one may observe that one of the central elements of the construction of this representation will be of course the graph and more specifically the structure that is associated with the input data. This transform is also well suited for introducing psychovisual information.

We will now introduce our strategy for graph construction. The objective is to build a graph with a model organized in such a way as to measure the color similarities.
%%%%%%%%%%%%%%%%%%
% Graph construction with the geodesic distance
%%%%%%%%%%%%%%%%%%
\section{Graph construction from the color image}\label{sec:33}
In this section we develop an adaptive approach where the study is not restricted to spatial coordinates. The idea is to be able to compute the distances between a pairwise of vertices which can be used to further characterize the color structure.  In low-dimensional embedding, the geodesic distance has proven its efficiency to estimate the intrinsic geometry of the data manifold used for example in nonlinear data analysis (Isomap technique \cite{ar:isomap}). Traditionally geodesics are defined through a parallel transport of a tangent vector in a linear connection. This distance is adequate for our definition in order to model the topological structure of the data rather well. In the next section, we note $(x_m,y_m)$ the embedding of the graph $G$ of the image on 2-dimensional Euclidean space with $m$ the vertex, $(R_m,G_m,B_m)$ the embedding on 3-color Red-Blue-Green space and $(L^\ast_m,a^\ast_m,b^\ast_m)$ the embedding on CIELAB space.

%%Graph connexion
\subsection{Graph connection}
The computation of the geodesic distance is estimated by applying an Isomap strategy \cite{ar:isomap} which consists in finding first the $k$-Nearest Neighbors ($k$-NN) for each pixel having the coordinates $(x,y,r,g,b)$. In order to find the shortest paths between two vertices and thus two color pixels we used the Dijkstra algorithm\cite{dijkstra1959note}. 
 The goal is to find the shortest path from the source vertex to the other vertices in the graph. 
 
To properly identify the geodesic distances the choice of the number of the $k$-Nearest Neighbors must not be too high \cite{Meng2008862}. Indeed some abnormal neighborhood edges deviate from the underlying manifold ({\em short circuit} effect). Moreover this value should be as low as possible in order to minimize the calculation time.
Ideally, we should compute all possible cases of the geodesic distance but for larger data, this requires significant computing and memory capacity. To achieve this, we consider only a small percentage of the totality of the structure. 

The weighted graph defined by equation (\ref{eq:we}) is thus obtained by applying the function, with $\sigma>0$ and 
the geodesic distance $\rho_{mn}$ between vertices $m$ and $n$ is defined as follows:
\begin{align}
\rho_{mn}=d_{m i}+d_{ij}+\cdots+d_{kn},
\end{align}

with $m \rightarrow i$, $i \rightarrow j$ $\cdots$ $k \rightarrow n$ the shortest path connecting two vertices according $d_{ij}$ the path distance connecting two vertices. The choice of distance is discussed in \ref{ssec:ED} and \ref{ssec:deltaE}. 

The weight function attempts to measure the difference between vertices and provides a clear link with the graph connection and the smoothness of the signal on the graph. If we choose an important value of $\sigma$ this leads to add more edges to the graph and also increases the total variation because two different values of the signal on the graph will probably be connected as discussed below.   

We now wish to discuss how our proposed framework can yield a graph wavelet for a color image by inserting a color dimension into the measured distance. Two approaches are put forward: in the first we are interested in simply representing the data in the classical Red Green Blue (RGB) color space, in the second we have used the CIELab space which can bring a visual aspect to the analysis. With this approach, we are interested in highlighting the visual characteristics in order to have a relevant representation of the color data through a weighted graph. Such a strategy allows us to enhance the link between two similar colors and to clearly indicate the ruptures of the color regions.

%% The Euclidean distance for color image
\subsection{Euclidean distance on RGB color space}\label{ssec:ED}

We denote $\Delta E_d^{mn}$ the Euclidian color distance between a pair of color values in RGB space $(R^\ast_m,G^\ast_m,B^\ast_m)$ and $(R^\ast_n,G^\ast_n,B^\ast_n)$ of both vertices $m$ and $n$. The $k$-NN distance between two vertices $m$ and $n$ is also defined by: 
\begin{align}\label{eq:c}
d_{mn}^2=(x_m-x_n)^2+(y_m-y_n)^2+ \left(\Delta E_d^{mn}\right)^2.
\end{align}

Equation (\ref{eq:we}) is then used to compute the weights of the edges. This way of building the graph allows us to consider all the color components of the image. Transforms are applied on each color component which is represented as a signal on the graph.
In figures \ref{fig:de5} and \ref{fig:de30} we show an example of the application on a color image using two different values of $\sigma$ $(\sigma_1=5, \sigma_2=30)$, the scale number $J$ is set to $3$. The approximation result in figure \ref{fig:de30} with a high $\sigma$ has more homogeneous areas than the approximation result in figure \ref{fig:de5} and also more details in the scales. It is to be noted that the choice of $\sigma$ remains empirical. We indicate on these figures a value $q_\tau$, the mean of quadratic form of wavelet coefficient on each scale $\tau$ for each color component and  a value $q$, the mean of quadratic form of wavelet coefficient for each color component. 
 It should be noted that the quadratic form is introduced in the following paragraph.

Now we propose a model of the behavior of the information in the SGWT basis. Parameter $\sigma$ has some influence on the analysis revolving around two main observations:
\begin{itemize}
\item the nature of details does not decrease according to the wavelet scale;
\item when $\sigma$ increases more information is contained in higher frequency bands. 
\end{itemize}

To clarify these issues, we focused on the distribution of the energy through the wavelet scale coefficients. Possibly the simplest way to measure this energy is to rely on the notion of the global smoothness by measuring the Laplacian quadratic form $q$.

\begin{figure}[!htb]
\centering
\includegraphics[width=7cm]{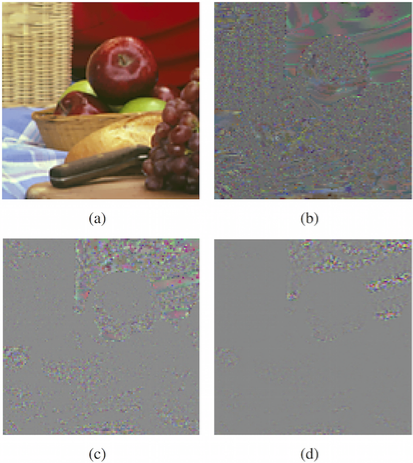}
\caption{SGWT on image \ref{fig:imagetest}(a) using the {\em ED}, $J=3$, $\sigma=5$, $q=366.6$: (a) approximation ($q_0=339$), (b) scale 1 ($q_1=191$), (c) scale 2 ($q_2=187$) and (d) scale 3 ($q_3=30$).}
\label{fig:de5}
\end{figure}

\begin{figure}[!htb]
\centering
\includegraphics[width=7cm]{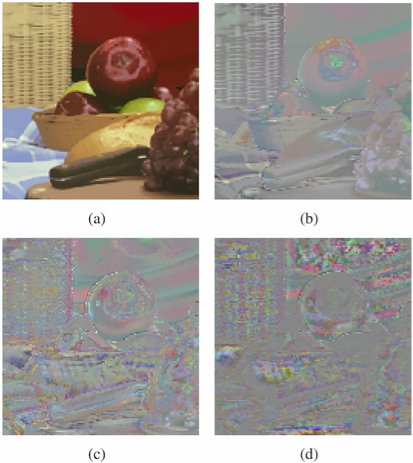}
\caption{SGWT on image \ref{fig:imagetest}(a) using the {\em ED}, $J=3$, $\sigma=5$, $q=6821.2$: (a) approximation ($q_0=6034$), (b) scale 1 ($q_1=3199$), (c) scale 2 ($q_2=3931$) and (d) scale 3 ($q_3=601$).}
\label{fig:de30}
\end{figure}

The quadratic form is defined as follows \cite{ar:rus} \cite{ar:spel}:
\begin{align}\label{eq:q}
q=\sqrt{f^T \textbf{L} f},
\end{align}
where $f$ is the signal on the graph. The quadratic form is small when $f$ is similar at the two extremes of the edge with a high weight \cite{ar:shumi}. 

Usually, the energy of information of the classical wavelet transform increases with the scale. Consequently the quadratic form increases also with the scale.

\begin{proposition}\label{prop:1}
The quadratic form of a given signal on a graph $f$ can be written as follows:
\begin{align}
q&=\sqrt{\sum_{l=0}^{N-1} (\hat{f} (l) \sqrt{\lambda_l})^2}.
\end{align}
\end{proposition}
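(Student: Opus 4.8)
The plan is to reduce the Laplacian quadratic form (\ref{eq:q}) to the spectral domain by diagonalizing $\mathbf{L}$ in its eigenbasis and then identifying the resulting coefficients with the graph Fourier coefficients $\hat{f}(l)$ through the orthonormality of the eigenvectors $\chi_l$.

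First I would use the fact that $\mathbf{L}$ is real symmetric, so that the eigenvectors $\{\chi_l\}_{l=0}^{N-1}$ form an orthonormal basis and $\mathbf{L}$ admits the spectral decomposition $\mathbf{L} = \mathbf{X}\Lambda\mathbf{X}^T$, where $\mathbf{X}$ is the matrix of eigenvectors appearing in (\ref{eq:fou}) and $\Lambda = \mathrm{diag}(\lambda_0,\dots,\lambda_{N-1})$. Substituting into the quadratic form gives $f^T\mathbf{L}f = (\mathbf{X}^T f)^T \Lambda (\mathbf{X}^T f)$. By the vector form (\ref{eq:fou}) of the graph Fourier transform, $\mathbf{X}^T f = \hat{f}$, so this equals $\hat{f}^T\Lambda\hat{f} = \sum_{l=0}^{N-1}\lambda_l\,\hat{f}(l)^2$ because $\Lambda$ is diagonal. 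Writing each summand as $\lambda_l\,\hat{f}(l)^2 = (\hat{f}(l)\sqrt{\lambda_l})^2$ and taking the square root then produces the claimed expression.

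The one step that deserves explicit justification, and the only place where something beyond routine linear algebra is used, is that each eigenvalue satisfies $\lambda_l \ge 0$, which is needed both for $\sqrt{\lambda_l}$ to be real and for the rewriting $\lambda_l\,\hat{f}(l)^2 = (\hat{f}(l)\sqrt{\lambda_l})^2$ to be meaningful. This follows from the positive semidefiniteness of the graph Laplacian: for any $f \in \R^N$ one has $f^T\mathbf{L}f = \tfrac{1}{2}\sum_{m,n} w(m,n)\,(f(m)-f(n))^2 \ge 0$, so all eigenvalues are nonnegative. Everything else is a direct substitution, so I do not anticipate any genuine obstacle; the content of the proposition is essentially the graph analogue of the Parseval-type identity for the Laplacian.
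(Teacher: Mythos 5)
Your proof is correct and follows essentially the same route as the paper's: diagonalize $\mathbf{L}=\mathbf{X}\Lambda\mathbf{X}^T$, substitute into $f^T\mathbf{L}f$, and identify $\mathbf{X}^Tf$ with $\hat{f}$ via (\ref{eq:fou}). The only difference is that you explicitly justify $\lambda_l\ge 0$ through the positive semidefiniteness of the Laplacian, a point the paper's proof leaves implicit; this is a welcome but minor addition.
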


\begin{proof}
Let us consider the expression of the quadratic form shown in (\ref{eq:q}).
$\textbf{L}$ is defined as follows:
\begin{align}\label{eq:ll}
\textbf{L}&=\textbf{X} \mathbf\Lambda \textbf{X}^T,
\end{align}
where $\mathbf\Lambda=diag(\lambda_0,\lambda_1,\lambda_2,\cdots,\lambda_{N-1})$ is the matrix of eigenvalues of $\textbf{L}$, substituting (\ref{eq:ll}) in (\ref{eq:q}):
\begin{align}
q^2&=f^T \textbf{X} \mathbf\Lambda \textbf{X}^T f,\\
q^2&=(\textbf{X}^T f)^T \mathbf\Lambda (\textbf{X}^T f).
\end{align}
According to equation (\ref{eq:fou}) we have:
\begin{align}\label{eq:piu}
q^2&= \hat{f}^T \mathbf\Lambda \hat{f}=\sum_{l=1}^N (\hat{f} (l) \sqrt{\lambda_l})^2.
\end{align}
\end{proof}

Writing the quadratic form in the spectral domain is a good representation of the variation of the signal on the graph, hence the quadratic form has a significant value when the based-graph Fourier transform coefficients on the high frequencies are high.

Let $p_\tau$ be the wavelet kernel at the scale $\tau$ with $p_0$ the scaling function kernel. The based-graph Fourier transform of the wavelet coefficients at each scale is written:
\begin{align}\label{eq:po}
\hat{f}_\tau (l)= p_\tau(l) \hat{f} (l),
\end{align}  
with $f$ the original signal on the graph. According to proposition \ref{prop:1}, the quadratic form of the wavelet coefficients at each scale $\tau$ is:
\begin{align}\label{eq:pu}
q_\tau^2&=\sum_{l=1}^N  p_\tau^2 (\lambda_l)  \hat{f}^2 (l) \lambda_l.
\end{align}
The localization of the information follows equation (\ref{eq:pu}) defined in the spectral domain.

\begin{figure}[htb]
\centering
\includegraphics[width=7cm]{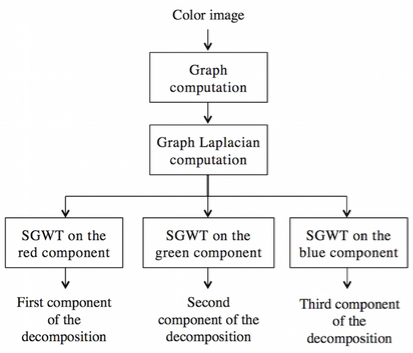}
\caption{Flowchart of a wavelet decomposition on graph of a color image.}
\label{fig:flowchart}
\end{figure}

A color image is decomposed in 3 scales using the numerical scheme in figure \ref{fig:flowchart}.
In figure \ref{fig:de5} one can observe clearly more details at scale 1 then scale 2 and in figure \ref{fig:de30} one can see clearly more details at scale 2 then scale 1.

 This is verified by comparing their quadratic form values ($q$ equals the average of the quadratic form of all color components) as presented in figures \ref{fig:de5}  and \ref{fig:de30}.

As explained before, $\sigma$ involves modifications in adjacencies in the graph structure. In particular, that has a considerable impact on the variation of the information content. This phenomenon is measured in our framework by the quadratic form which increases with the number of connections in the weighted graph. This involves more localization of the energy of the signal $f$ in the higher frequencies (see more details in \cite{ar:shumi})  as one can see in figure \ref{fig:de30}. 

One can observe that the SGWT extracts various high level information or details to solve a specific problem. For example, in the case of image denoising application, the choice of $\sigma$ and the color distance is adapted to obtain a significant separation of noise as will be see in section \ref{sec:5}.

We now explore the specifications of the graph construction through the geodesic distance to introduce a psychovisual conception in the graph wavelet.

%% The $\Delta E_{2k}$ distance
\subsection{The $\Delta E_{2k}$ difference }\label{ssec:deltaE}
The use of the visual features is becoming increasingly widespread in the field of image processing. Several metrics developed to assess the perceptual qualities of color images falls within this context. The principle is thus based on the modeling of the human visual system. 
The CIELab color space is considered as the most complete color space which describes all colors visible to the human eye. This representation is created to serve as a device independent model used as a reference.
The three coordinates of CIELab represent the lightness of the color ($L^\ast$), the position between red and green ($a^\ast$) and the position between yellow and blue ($b^\ast$).

Intuitively, the employment of these characteristics in the analysis of the data provides a good localization of the specific means of the color which can be adapted to the human vision. The purpose of this work is to introduce this aspect on the multi-scale analysis, therefore we compare the features of a given pair of vertices by using the $\Delta E_{2k}$ color difference. The $\Delta E_{2k}$ difference allows us to measure the difference between two colors modeled on the psychovisual system  \cite{ar:de}. Moreover, a high $\Delta E_{2k}$ difference corresponds to an important colorimetric difference (see \cite{ar:zhang,ar:luo,ar:vienna}). 

This notion can be included into the transform through the computation of the geodesic distances of the data represented in the CIELab space. This amounts to combining spatial and color comparisons. As regard to the color difference, we opt for the $\Delta E_{2k}$ difference.

We denote $\Delta E_{2k}$ the color difference between a pair of color values in CIELab space $(L^\ast_m,a^\ast_m,b^\ast_m)$ and $(L^\ast_n,a^\ast_n,b^\ast_n)$ of both pixels $m$ and $n$ as follows:
\begin{align}
\Delta E_{2k}(L^\ast_m,a^\ast_m,b^\ast_m,L^\ast_n,a^\ast_n,b^\ast_n)=\Delta E_{2k}^{mn},
\end{align} 
the $k$-NN distance between two pixels $m$ and $n$ is also defined by: 
\begin{align}\label{eq:c}
d_{mn}^2&=(x_m-x_n)^2+(y_m-y_n)^2+ \left(\Delta E_{2k}^{mn}\right)^2, 
\end{align}

We illustrate the decomposition based on the $\Delta E_{2k}$ difference and by using the same parameters as the previous section. 
For all of computations, $\Delta E_{2k}$ differences are computed for an observer visualizing images 47 cm away from a LCD monitor displaying 72 dots per inch (a standard configuration). 
So a $\Delta E_{2k}$ greater than 4 corresponds to a high perceptual difference and it is thus not necessary to weight the perceptual difference and the spatial distance.

The influence of parameter $\sigma$ is illustrated in both figures \ref{fig:deltaE1} ($\sigma=5$) and \ref{fig:deltaE2} ($\sigma=30$). Indeed when $\sigma$ increases, details shift to the high frequencies.

If we compare the difference with the classical Euclidean distance, at a fixed value of $\sigma$, the computation of the distance measured in the CIELab space creates obviously more edges and enhances the links between neighbors. The quality of the real color distance has direct impact on the increasing quadratic form: the energy of the based-graph signal ${f}$ is more concentrated on the higher frequencies. From proposition \ref{prop:1}, we deduce that the values of the coefficients of the based-graph Fourier transform $\hat{f} (l)$ in the higher frequencies increase in relation to the weights and the number of edges. To illustrate this point, we obtain a larger value of the quadratic form for color distance (figure \ref{fig:deltaE2}, $q=20507$), than for Euclidean distance (figure \ref{fig:de30}, $q=6828.2$).

Consequently, the change of the distance implies a transfer of details to the higher frequencies, as one can see when comparing figures \ref{fig:de30} and \ref{fig:deltaE2} (or figures \ref{fig:de5} and \ref{fig:deltaE1} ). In figure \ref{fig:de30} the scaling function coefficient has the higher quadratic form ($q_0=6034.4$) while in figure \ref{fig:deltaE2} the higher quadratic form corresponds to the wavelet coefficient on the scale $2$ ($q_2=15499$). This phenomenon is explained in the equation \ref{eq:pu}: the $\Delta E_{2k}$ difference causes an increase of the Fourier coefficients on the higher frequencies, and this increase is controlled by the wavelet kernel at each scale noted ($p_\tau(\lambda)$).

To conclude, figure \ref{fig:deltaE1}(a) highlights the importance of taking into account the human visual system. Indeed color regions appear notably more contrasted than figures \ref{fig:de5} and \ref{fig:de30} and one can observe also a good separation with sharper edges. In figure \ref{fig:deltaE2}(a) the colored regions become more homogeneous because of the effect of parameter $\sigma$.

\begin{figure}[!ht]
\centering
\includegraphics[width=7cm]{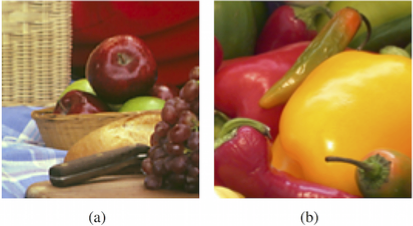}
\caption{Images for experimentations: (a) image 1, (b) image 2.}
\label{fig:imagetest}
\end{figure}

\begin{figure}[!ht]
\centering
\includegraphics[width=7cm]{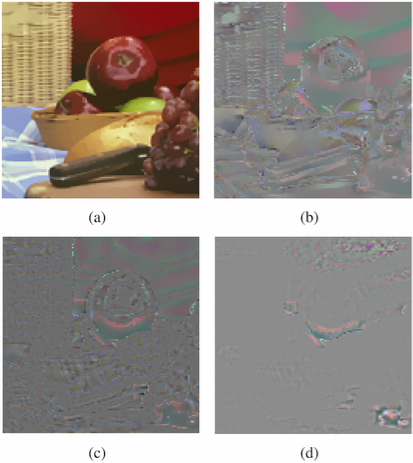}
\caption{SGWT on image \ref{fig:imagetest}(a) using the  $\Delta E_{2k}$ distance, $J=3, \sigma=5,q=2661$: (a) approximation ($q_0=1888$), (b) scale 1 ($q_1=1171$), (c) scale 2 ($q_2=1706$) and (d) scale 3 ($q_3=506$).}
\label{fig:deltaE1}
\end{figure}

\begin{figure}[!ht]
\centering
\includegraphics[width=7cm]{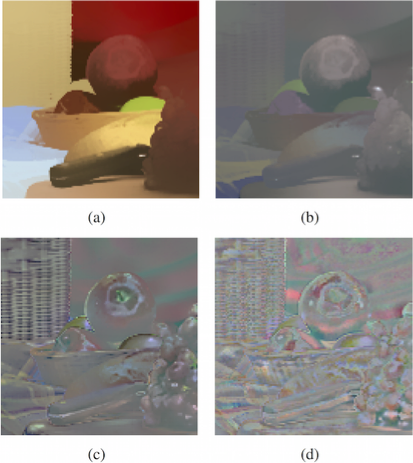}
\caption{SGWT on image \ref{fig:imagetest}(a) using the   $\Delta E_{2k}$ distance, $J=3, \sigma=30,q= 10851$: (a) approximation ($q_0=1888$), (b) scale 1 ($q_1=7440$), (c) scale 2 ($q_2=15499$) and (d) scale 3 ($q_3=3691$).}
\label{fig:deltaE2}
\end{figure}

%% Conservation of the data organization into the basis
\subsection{Conservation of the data organization into the basis}
So far, in order to illustrate the importance of the phase information in the Fourier transform, many authors have proposed to mix the phase of a first image with the magnitude of the Fourier coefficients of a second image. In this subsection, in order to illustrate how our scheme takes into account the geometry of the data and captures singularities of the image, we suggest a similar operation. We will construct the graph from a first image and compute the analysis with the SGWT on a second image. In our example we use the image \ref{fig:imagetest}(a) to construct the graph, then the SGWT is applied on the image \ref{fig:imagetest}(b). Figures \ref{fig:geomeuc}, \ref{fig:geomlab} illustrate respectively the representation in the RGB and CIELab color space.

It is to be noted that the geometry is recovered into the scaling function coefficients. We set the parameter of deviation $\sigma$ at a large value in order to minimize the information content of image \ref{fig:imagetest}(b) at the scaling function coefficients. Two observations can be made:
\begin{itemize}
\item Figure \ref{fig:geomeuc}(a) shows that the Euclidean distance is in better accord with textures (on the left) because this distance designed in the RGB space recognizes well the data organization. Indeed textures are characterized by energy measures:  the perceptual approach is therefore not appropriate.
\item Figure \ref{fig:geomlab}(a) shows that the  $\Delta E_{2k}$ color difference conserves edges well and keeps more color regions. We also see a preservation of the specular region on objects presented on the original image \ref{fig:imagetest}(a).
\end{itemize}

\begin{figure}[!htb]
\centering
\includegraphics[width=7cm]{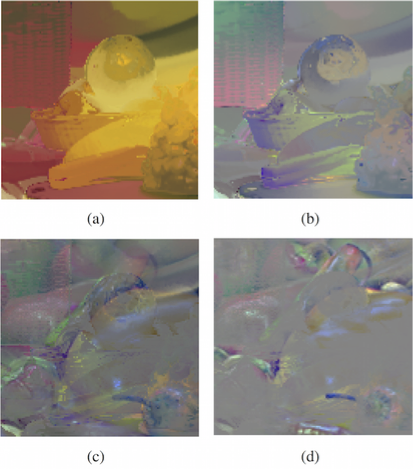}
\caption{SGWT with the graph constructed from image \ref{fig:imagetest}(a) and applied on image \ref{fig:imagetest}(b) using the {\em ED}, $J=3, \sigma=30$: (a) approximation, (b) scale 1, (c) scale 2 and (d) scale 3.}
\label{fig:geomeuc}
\end{figure}

\begin{figure}[!htb]
\centering
\includegraphics[width=7cm]{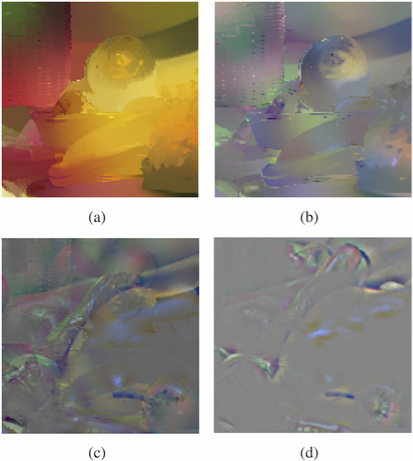}
\caption{SGWT with the graph constructed from image \ref{fig:imagetest}(a) and applied on image \ref{fig:imagetest}(b) using the $\Delta E_{2k}$ difference, $J=3, \sigma=10$: (a) approximation, (b) scale 1, (c) scale 2 and (d) scale 3.}
\label{fig:geomlab}
\end{figure}

The experiment above has shown the advantages of the implementation of perceptual features in the analysis of the color data: the color information is well encoded into the wavelet basis. 
%%%%%%%%%%%%%%%%%%
% Application on image denoising
%%%%%%%%%%%%%%%%%%
\section{Potential of this color transform for image restoration}\label{sec:5}
Image denoising is one of the fundamental applications of the multiscale analysis. The representation on a graph domain preserves the structure of the data and it ensures also the robustness of the denoising process. We assess the performance of our approach with the two representations of the color data.

To compare our proposed denoising methods in the same context (multiscale transform and thresholding), we compute the classical denoising method based on an undecimated D8 Daubechies wavelet transform (UWT) with three scales.
We have used the undecimated Wavelet transform because this decomposition obtains better denoising results than the classical orthogonal Wavelet thresholding and the SWGT is also a redundant transform.
 The SGWT is computed with a cubic spline kernel as specified in \ref{ssec:sgwt}. 

%% Comparison
\subsection{Comparison of image denoising methods}

The denoising procedure by both wavelet transforms simply consists in thresholding the wavelet coefficients on each color component and computing the inverse transform. 
Hard thresholding\cite{donoho1994ideal} of a wavelet coefficient $W_f(t,n)$ is defined by: 
\begin{equation}
\mathcal T^H_{\alpha}(W_f(t,n)) = W_f(t,n) 1_{( |W_f(t,n)|>\alpha)},
\end{equation}
with $\alpha$ the threshold. In our application, $\alpha$ is the empirical threshold defined as $\alpha=3\tau$ with variance $\tau$ which is estimated using the absolute median of the wavelet decomposition's first scale.

The denoising processing with the perceptual graph wavelet transform consists in algorithm \ref{algo:denoising}. A flowchart is proposed in figure \ref{fig:flowchart_denoising}. We note that geometrical and color structuration is contained in graph $G$. 

\begin{algorithm}[!ht]
Initialization: Color Image $I$\;
$I_\text{smooth}$ $\leftarrow$ Smooth($I$)\;
Compute graph $G$ of $I_\text{smooth}$\;
Compute $\mathbf L$\;
\For{each color component}{
Compute SGWT on $I$ with $L$\;
Compute hard thresholding\;
Compute inverse SGWT\;
}
\caption{Graph wavelet denoising}
\label{algo:denoising}
\end{algorithm}

\begin{figure}[htb]
\centering
\includegraphics[width=7cm]{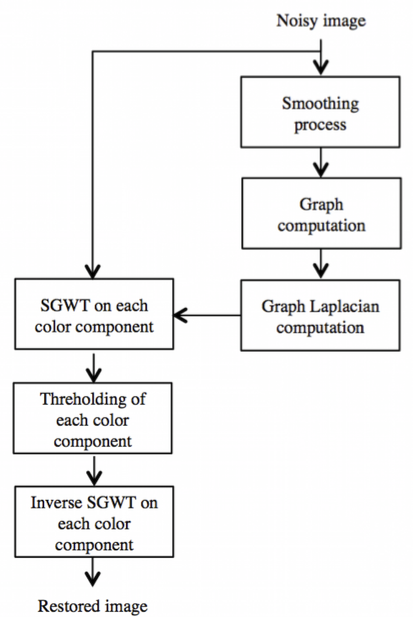}
\caption{Flowchart of the denoising with the wavelet decomposition on graph of a color image.}
\label{fig:flowchart_denoising}
\end{figure}

We simulate noisy images by adding a Gaussian white noise to a color image with a known standard deviation and we then compute the image recovered from the noisy one by each method.

One of the drawbacks of the SGWT is the poor denoising performance when applied directly on noisy images \cite{ic:hammond2012}. This poor separation of noise is due to the fact that the scaling function is arbitrary and not scale-dependent \cite{ar:mining}. In figure \ref{fig:decbruit} we show an example of direct application of the SGWT on noisy images. We have a degraded quality of the wavelet coefficient at each scale and the coefficients provide limited details. Therefore, such a decomposition depends heavily on the constructed graph, we need a regularized graph which will allow us to estimate the image structure and provides with the wavelet coefficients a good separation between the noise and the information. 
\begin{figure}[!ht]
\centering
\includegraphics[width=7cm]{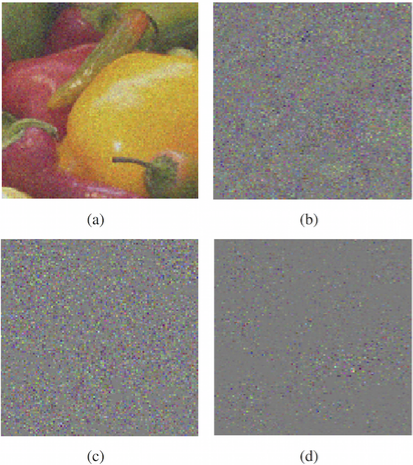}
\caption{SGWT applied on a noisy image \ref{fig:imagetest}(a) using the {\em ED}, $\sigma=10$: (a) approximation, (b) scale 1, (c) scale 2 and (d) scale 3.}
\label{fig:decbruit}
\end{figure}

We recall that our aim through this study is to illustrate the advantages of the representation in the CIELab color space. To regularize coarsely the noisy image, we apply a Gaussian smoothing and we construct the graph on this result. We then compute an analysis with the SGWT on each color component (RGB) of the noisy image using a cubic spline wavelet kernel with three scales ($J=3$). 

It is to be noted that we did not choose the same value of $\sigma$ for the graph construction to compare the results of reconstructions. As we have seen previously for the same value of $\sigma$, the $\Delta E_{2k}$ difference allows us to localize more energy in the higher frequencies : this can affect the quality of the reconstructed image. One can address  this issue by choosing a lower value of $\sigma$.

Several techniques of estimation of the clean image and wavelet coefficients thresholding are also discussed in \cite{ic:hammond2012}.  

%% Results interpretation
\subsection{Results interpretation}
We discuss the denoising schemes with two color images {\em girl} and {\em butterfly} which propose different textures and colors, specular effects, round objects, edges in different directions, etc.

\begin{figure}[!ht]
\centering
\includegraphics[width=7cm]{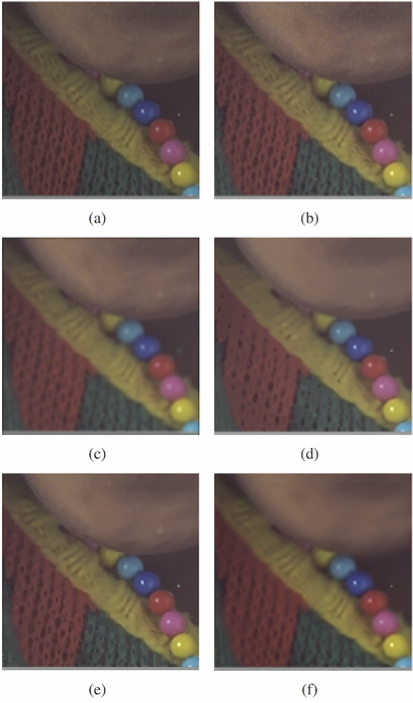}
\caption{Denoising results for image {\em Girl} with a low noise: 
(a) Original image;
(b) Noisy image (SNR=(3.4,5.1,3.9)dB, SSIM=0.965, QSSIM=0.944);
(c) Smoothed image (SNR=(3.3,5.0,3.8)dB, SSIM=0.924, QSSIM=0.922);
(d) {\em ED}, $\sigma=10$ (SNR=(1.8,3.1,1.6)dB, SSIM=0.916, QSSIM=0.905);
(e) $\Delta E_{2k}$, $\sigma=1$ (SNR=(4.2,6.3,5.4)dB, SSIM=0.947, QSSIM=0.939);
(f) UWT (SNR=(3.3,5,3.8)dB, SSIM=0.931, QSSIM=0.927).
}
\label{fig:1g}
\end{figure}

\begin{figure}[!ht]
\centering
\includegraphics[width=7cm]{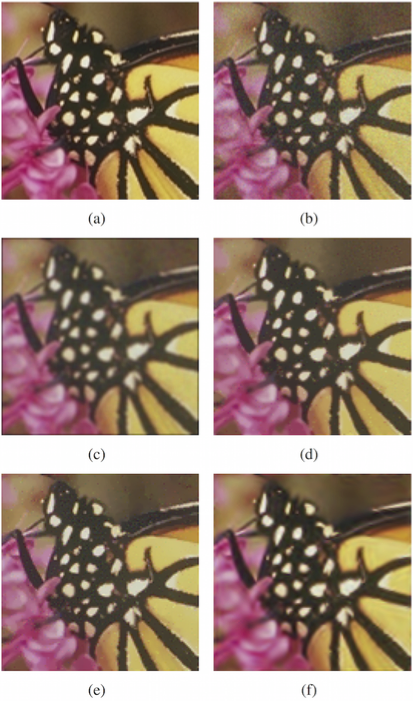}
\caption{Denoising results for {\em butterfly} image with a low noise:
(a) Original image;
(b) Noisy image (SNR=(13.4,12.2,9.9)dB, SSIM=0.903, QSSIM=0.850);
(c) Smoothed image (SNR=(9.9,9.4,7.9)dB, SSIM=0.861, QSSIM=0.853);
(d) {\em ED}, $\sigma=10$ (SNR=(14.0,12.7,10.3)dB, SSIM=0.918, QSSIM=0.898);
(e) $\Delta E_{2k}$, $\sigma=2$ (SNR=(14.0,12.9,10.5)dB, SSIM=0.928, QSSIM=0.907);
(f) UWT (SNR=(12.6,11.3,9.1)dB, SSIM=0.921, QSSIM=0.916).
}
\label{fig:1b}
\end{figure}

\begin{figure}[!ht]
\centering
\includegraphics[width=7cm]{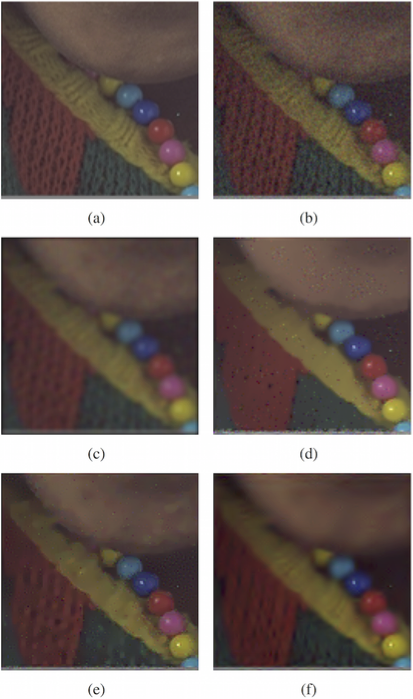}
\caption{Denoising results for image {\em Girl} with a high noise:
(a) Original image;
(b) Noisy image (SNR=(-0.8,-1.1,-3.3)dB, SSIM=0.733, QSSIM=0.571);
(c) Smoothed image (SNR=(-0.2,-0.5,-2.7)dB, SSIM=0.830, QSSIM=0.823);
(d) {\em ED}, $\sigma=10$ (SNR=(-3.8,-3.1,-4.9)dB, SSIM=0.792, QSSIM=0.740);
(e) $\Delta E_{2k}$, $\sigma=2$ (SNR=(0.6,0.6,-1.5)dB, SSIM=0.810, QSSIM=0.763);
(f) UWT (SNR=(-0.2,-0.5,-2.7)dB, SSIM=0.817, QSSIM=0.819).
}
\label{fig:2g}
\end{figure}

One can note that this quality of restoration is confirmed by the measurement of SNR (we have noted the SNR measurements on the red, green and blue bands as $\mbox{SNR}=\left(\mbox{SNR}_{\mbox{red}},\mbox{SNR}_{\mbox{green}},\mbox{SNR}_{\mbox{blue}}\right)$dB.

The structural similarity (SSIM) index \cite{wang2004image} is computed to measure the quality of the reconstruction of the original geometrical and structural information. This measure is associated with the Value component of the color image. We compute also the quaternion structural similarity index (QSSIM) \cite{kolaman2012quaternion} that takes into account the quality of the reconstruction of the color information.

In figures \ref{fig:1g}(e) and \ref{fig:1b}(e), we observe that the restoration in CIELab color space provides satisfactory results with a low level  of noise. Indeed, the psychovisual consideration improves the visual quality of noisy image and ensures the conservation of the discontinuity between the homogeneous regions. One can observe that the reconstruction in the RGB color space in figures \ref{fig:1g}(f) and \ref{fig:1b}(f) indicates a good result but a lot of details of the image have disappeared. Comparing the UWT method in figures \ref{fig:1g}(f) and \ref{fig:1b}(f), we observe that important details are preserved (such as the reflection on the shield) with our method. The graph structuration allows us to connect vertices similar in terms of color distance or perceptual difference. 

This indicates that the classical measures are not always appropriate for estimating the degradation: the SSIM and QSSIM are greater in the noisy image in figure \ref{fig:1g}(b), because these measures are particularly sensitive to very small variations and the low noise does not enough damage the structure of our original image in figure \ref{fig:1g}(a). However the image denoising with our method using the psychovisual information in figure  \ref{fig:1g}(e) is better in terms of SSIM, QSSIM and SNR than the reference denoising using UWT in figure \ref{fig:1g}(f).

When we increase the noise level (figure \ref{fig:2g}(b)), it is difficult to construct a graph without a good estimation of the coarse approximation of the original image. This is due to the problem of the estimation of the initial graph. A simple Gaussian smoothing  (figure \ref{fig:2g}(c)) does not produce a good noise-free coarse representation of data and one can observe a cartoon-like effect on the homogeneous zones of the denoised image in figures \ref{fig:2g}(d) and \ref{fig:2g}(e). 

It is to be noted that there is no global perceptual difference between the UWT and the SGWT defined with The Euclidian distance. Both these restorations are however different from one antoher with a better enhancement of edges for the homogeneous regions for the SGWT method (figure \ref{fig:2g}(d)) in comparison to the UWT method (figure \ref{fig:2g}(f)). 

\begin{figure}[!ht]
\centering
\includegraphics[width=7cm]{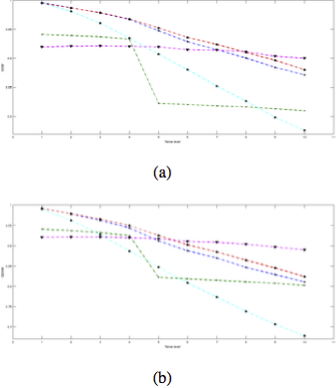}
\caption{(a) Structural similarity index and (b) quaternion structural similarity index in function the noise standard deviation for each method (noisy image in cyan, SGWT with {\em ED} in blue, SGWT with $\Delta E_{2k}$ in red, smoothed image in green and UWT in magenta).}
\label{fig:ssimcomparaison}
\end{figure}

We propose to assess the quality of methods by measuring structural similarity indices (SSIM and QSSIM) between denoised images and the original one for different levels of noise. 
In figure \ref{fig:ssimcomparaison}(a) the SSIM highlights that the method based on the SGWT with the $\Delta E_{2k}$ difference obtains a better reconstruction than the same method with the {\em ED}. 
In figure \ref{fig:ssimcomparaison}(b) the QSSIM highlights the same results. We observe that the method based on the SGWT with the $\Delta E_{2k}$ difference obtains a best perceptual reconstruction for a slight noise. When the standard deviation of noise is greater than 5.5, the UWT D8 obtains a better perceptual reconstruction compared to our based-graph decomposition. 
Indeed, one cannot obtain a good first estimation of the based-graph image after a simple regularization when the noise is too high. It will be necessary in the future to define a better strategy to estimate the based-graph coarse approximation of the image.

% Inpainting in color images
\subsection{Potential of these color transform for inpainting}\label{sec:6}
We propose to extend the scope of applications to the inpainting in color images. We extend the principle of inpainting with classical wavelet on the graph domain. Our approach consists in the use of a classical thresholding iterative process. 

Let $f$ be the observed signal and $y=\Phi v +n$ the signal with missing pixels at locations $\Phi$, $n$ an additive noise.
The estimation $\hat y$ can be obtained by resolving the following optimization:
\begin{align}
\hat y=\argmin_v\frac{1}{2}\left\| y-\Phi v \right\|_2^2 + \Lambda R(v),
\end{align}
where $R$ is a regularizer which imposes an {\it a priori} knowledge and $\Lambda$ should be adapted to the noise level.

By definition, $v$ is sparse in the dictionary $W^\ast$ (synthesis operator) and $v$ is obtained by linear combination $v=W^\ast a$. We compute a sparse set of coefficients denoted $a$ in a frame $W=(\psi_m)$:
\begin{align}
\hat a=\argmin_a\frac{1}{2}\left\| y-\Phi \Psi^\ast a  \right\|_2^2 + \Lambda \ R(a),
\end{align}
with $R(a)$ the $\ell_1$ sparsity regularizer defined by:
\begin{align}
R(a)=\sum_m \left\|a_m\right\|_2.
\end{align}

The problem corresponds to a linear ill posed inverse problem. It can be solved by an iterative scheme with the hard thresholding operator $\mathcal T^{H}_\alpha$.
In the case of the SGWT, the wavelet transform is computed in respecting the structure of data, the solution is also obtained by the numerical scheme presented in algorithm \ref{algo:2}.
\begin{algorithm}[!ht]
Initialize $N$\\
 $v^0 \leftarrow y$; $k \leftarrow 0$\;
Compute $\mathbf L$\;
\While{￼$k<N$}
{
$T_g^t \leftarrow g(tL)$\;
$ v^{k+1} \leftarrow v^k + \Phi\left[\left(T^t_g\right)^\ast \mathcal T^H_\alpha \left(T^t_g\right) v^k - v^k \right]$\;
$ k \leftarrow k+1$\;
}
\caption{Graph wavelet inpainting}
\label{algo:2}	
\end{algorithm}

The graph cannot be constructed for damaged zones: we should therefore provide graph construction techniques more suitable for missing data.
\begin{figure}[!ht]
\centering
\includegraphics[width=7cm]{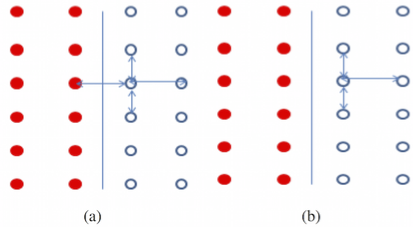}
\caption{Topology on edge with missing data: (a) classical, (b) with kppv.}
\label{fig:topo}
\end{figure}

To connect vertices in the spatial domain with edges, a regular mesh represents a simpler tool to estimate the data structure of the missing areas (cf figure \ref{fig:topo}). 
From this principle we propose to introduce a regular mesh within the processing areas.
%% Analyses of results
\begin{figure}[!ht]
\centering
\includegraphics[width=7cm]{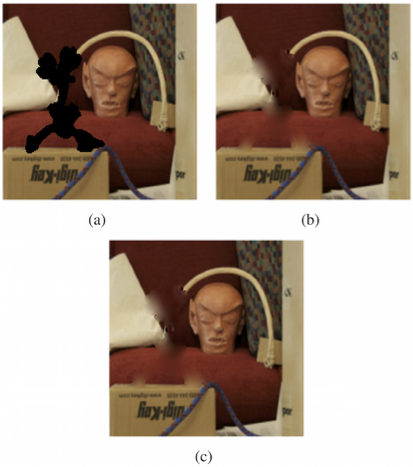}
\caption{Inpainting on an attacked image:
(a) Attacked image (SNR=$(3.2, 1.9, 12.4)$dB, SSIM=0.88, QSSIM=0.89);
(b) {\em ED} (SNR=$(10.8, 12.2, 16.0)$dB, SSIM=0.93, QSSIM=0.92;
(c) $\Delta E_{2k}$ distance (SNR=$(10.8, 14.1, 16,0)$dB, SSIM=0.93, QSSIM=0.93).}
\label{fig:inpainting}
\end{figure}

We have applied the graph-based wavelet inpainting method on the image in figure \ref{fig:inpainting}(a). One can notice that the color information is diffused in the damaged regions and results show clearly the limitation of the method when applied on textured regions, that is a classical problem in the inpainting using a PDE-based method. We propose to restore this image with an inpainting processing using the RGB color space \ref{fig:inpainting}(b) and using the CIELab color space to compute the graph (figure \ref{fig:inpainting}(c)).  The graph structuration allows us to regularize similar zones whose vertices are connected.

In term of SNR, the inpainting results with a graph structure computed with an Euclidian distance and those computed with a $\Delta E_{2k}$ are close except on the green component. The quality metrics (SSIM and QSSIM) highlight a good reconstruction of structures.

%%%%%%%%%%%%
% Conclusion
%%%%%%%%%%%%
\section{Conclusion}

We have introduced a framework which takes into account the intrinsic geometry and the color information of an image. Indeed we have developed a perceptual wavelet transform based on the graph of the image and, in particular, on the $\Delta E_{2k}$ color difference in the graph computation. Furthermore using the Laplacian quadratic form to measure the amount of energy of information helps understanding the behavior of this new representation. Moreover, the quadratic form depicts the influence of parameters on the localization of information across the wavelet scales. 

The application on the image denoising shows that the restoration in the CIELab space allows to enhance homogeneous zones with respect to their ruptures and keeping some important visual details. 
It is to be noted that the use of the geodesic distance allows us to preserve the geometry of the image. This provides an enhancement of regions and does not create any artifact or blur. We illustrate the respect of color in a restoration processing by introducing perceptual information in the graph computation. We have compared this new representation with an approach based on the Euclidian distance.
We have also introduced the perceptual SGWT in a classical inpainting process and the results are correct but do not depend on the two distances for graph computation.

Our work yields promising results and shows the relevance of taking into consideration the geometry of the image and perceptual information (with the CIElab space) for the wavelet transform. This methodology can be extended to a multi- or hyperspectral image where the choice of the distance to compute the graph may have a great influence on the denoising process, inpainting application and texture characterization.

%\acknowledgments 
%This unnumbered section is used to identify those who have aided the authors in understanding or accomplishing the work presented and to acknowledge sources of funding. 
 
%\bibliography{biblio}
\bibliographystyle{alpha}

\end{document}